\documentclass[runningheads,a4paper,10pt]{llncs}
\usepackage[utf8]{inputenc}

\usepackage[colorlinks=true]{hyperref}
\usepackage{graphicx}
\usepackage{caption}
\usepackage{algorithmic}
\usepackage{algorithm}
\usepackage{subcaption}
\usepackage{amsmath}
\usepackage{cite}

\DeclareMathOperator*{\argmax}{argmax}

\begin{document}

\mainmatter
\title{Max-Sum Diversification, Monotone Submodular Functions and Semi-metric Spaces}

\titlerunning{Max-Sum Diversification and Semi-metric Spaces}

\author{Sepehr~Abbasi~Zadeh and
        Mehrdad~Ghadiri
        \\\{sabbasizadeh, ghadiri\}@ce.sharif.edu}
\institute{School of Computer Engineering, Sharif University of Technology, Iran}
        
\authorrunning{S. Abbasi Zadeh and M. Ghadiri}
\maketitle

\begin{abstract}
In many applications such as web-based search, document summarization, facility location and other applications, the results are preferable to be both representative and diversified subsets of documents.
The goal of this study is to select a good ``quality'', bounded-size subset of a given set of items, while maintaining their diversity relative to a semi-metric distance function.
This problem was first studied by Borodin et al\cite{borodin}, but a crucial property used throughout their proof is the triangle inequality. In this modified proof we want to relax the triangle inequality and relate the approximation ratio of max-sum diversification problem to the parameter of the relaxed triangle inequality in the normal form of the problem (i.e., a uniform matroid) and also in an arbitrary matroid.
\end{abstract}

\section*{Introduction}
In many search applications, the search engine should guess the correct results from a given query; therefore, it is important to deliver a diversified and representative set of documents to a user. Diversification can be viewed as a trade-off between having more relevant results and having more diverse results among the top results for a given query\cite{chen2006less}. ``Jaguar'' is a cliche example in the diversification literature \cite{wang2007learn, clarke2008novelty, chekuri1997web}, but it illustrates the point perfectly as it has different meanings including car, animal, and a football team. 
A set of good ``quality'' result should cover all these diversified items. 
The paper by Borodin et al\cite{borodin} determines the good quality results with a monotone submodular function and defines diversity as the sum of distances between selected objects. 
Since they consider the distances to be metric, they ask in the conclusion section:
\begin{quote}
For a relaxed version of the triangle inequality can we relate the approximation ratio to the parameter of a relaxed triangle inequality?
\end{quote}

In this study we answer to this question. We call this relaxed triangle inequality distance as semi-metric. A semi-metric distance on a set of items is just like a metric distance, but the triangle inequality is relaxed with a parameter $\alpha \geq 1$ (i.e., $d(u,v) \leq \alpha(d(v,w) + d(w,u))$).
Answering to this question will make this method applicable to algorithms that are defined on semi-metric spaces, e.g., \cite{veltkamp2001shape, fagin2003comparing, o2002streaming}.
The IBM's Query by Image Content system is one of the other best-known examples of the semi-metric usage in practice; although, it does not satisfy the triangle inequality\cite{fagin1998relaxing}.
By modifying the analysis of the previous proposed algorithms in \cite{borodin}, we will show that these algorithms can still achieve a $2\alpha$-approximation for this question in the case that there is not any matroid constraint and a 2$\alpha^2$-approximation for an arbitrary matroid constraint. In other words, these new modified analysis are a generalization of the previous analysis as they are consistent with the previous approximation ratios for $\alpha=1$ (i.e., the metric distance). 
\section*{Problem 1. Max-Sum Diversification}
Let $U$ be the underlying ground set, and let $d(.,.)$ be a semi-metric distance function on $U$.
The goal of the problem is to find a subset $S \subseteq U$ that:\\\\
\hspace*{2cm} maximizes $f(S)+\lambda \sum_{\{u,v\}:u,v\in S}d(u,v)$\\
\hspace*{2cm} subject to $|S| = p$,\\\\
where $p$ is a given constant number and $\lambda$ is a parameter specifying a trade-off between the distance and submodular function.
We give a $2\alpha-$approximation for this problem.

Firstly we introduce our notations following \cite{borodin}. 
For any \mbox{$S \subseteq U$}, we let ${d(S)=\sum_{\{u,v\}:u,v\in S}d(u,v)}$.
We can also define \mbox{$d(S,T)$}, for any two disjoint sets $S$ and $T$ as: 
\begin{equation}
{d(S \cup T) - d(S) - d(T)}.\nonumber
\end{equation}
Let \mbox{$\phi(S)$} and $u$ be the value of the objective function and an element in \mbox{$U-S$} respectively. We can define the marginal gain of the distance function as 
\begin{equation}
\mbox{${d_u(S) = \sum_{v\in S}d(u, v)}$}\nonumber 
\end{equation}
and similarly marginal gain of the wight function as:
\begin{equation}
\mbox{$f_u(S) = f (S + u) - f (S)$}.\nonumber 
\end{equation}
The total marginal gain can also be defined using \mbox{$d_u(S)$} and  \mbox{$f_u(S)$} as 
\begin{equation}
\mbox{$\phi_u(S) = f_u (S) + \lambda d_u(S)$}.\nonumber 
\end{equation}
Let 
\begin{gather}
f'_u(S) = \frac{1}{2}f_u(S), \nonumber \\
\phi_u'(S) = f'_u (S) + \lambda d_u(S) \nonumber.
\end{gather}
Starting with an empty set $S$, the greedy algorithm (Algorithm~\ref{alg1}) adds an element $u$ from \mbox{$U - S$} in each iteration, in such a way that maximize \mbox{$\phi'_u(S)$}.
\begin{algorithm}                      % enter the algorithm environment
\caption{Greedy algorithm}          % give the algorithm a caption
\label{alg1}                           % and a label for \ref{} commands later in the document
\begin{algorithmic}[1]                    % enter the algorithmic environment
\STATE \textit{Input}
      \STATE \hspace{\algorithmicindent} $U$: set of ground elements
      \STATE \hspace{\algorithmicindent} $p$: size of final set
\STATE \textit{Output}
      \STATE \hspace{\algorithmicindent} $S$: set of selected elements with size $p$
     \STATE \mbox{$S = \emptyset$}
\WHILE{\mbox{$|S| < p $}}
	\STATE \textit {find \mbox{$u \in U \setminus S$} maximizing $\phi'_u(S)$}
	\STATE \mbox{$S=S \cup \{u\}$}
\ENDWHILE
\RETURN \mbox{$S$}
\end{algorithmic}
\end{algorithm}
\begin{lemma}
\label{lemma:alpha}
Given an $\alpha$-relaxed triangle inequality semi-metric distance function $d(.,.)$, and two disjoint sets $X$ and $Y$, we have the following inequality:\\
$$\alpha(|X|-1)d(X,Y) \geq |Y|d(X)$$
\end{lemma}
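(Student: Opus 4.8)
The plan is to derive the inequality by applying the relaxed triangle inequality to every pair inside $X$, using an arbitrary witness vertex drawn from $Y$, and then summing the resulting inequalities over all pairs and all witnesses. Concretely, for a fixed unordered pair $\{u,v\}\subseteq X$ and a fixed $w\in Y$, the $\alpha$-relaxed triangle inequality (together with the symmetry of the semi-metric) gives
$$d(u,v)\leq \alpha\bigl(d(u,w)+d(w,v)\bigr).$$
Summing this over all $w\in Y$ and over all unordered pairs $\{u,v\}\subseteq X$ produces a single inequality whose two sides I will then identify with the quantities appearing in the statement.

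For the left-hand side, the summand $d(u,v)$ does not depend on $w$, so summing over the $|Y|$ choices of $w$ multiplies $\sum_{\{u,v\}\subseteq X}d(u,v)=d(X)$ by $|Y|$, yielding exactly $|Y|\,d(X)$. The right-hand side requires the counting argument that is the crux of the proof: I need to determine how many times each cross term $d(x,w)$ (with $x\in X$, $w\in Y$) is generated. Fixing $w$, each element $x\in X$ occurs in precisely $|X|-1$ of the unordered pairs $\{u,v\}\subseteq X$, namely one for every other element of $X$, and in each such pair the factor $d(x,w)$ appears exactly once. Hence for each fixed $w$ the inner sum collapses to $(|X|-1)\sum_{x\in X}d(x,w)$, and summing over $w\in Y$ and multiplying by $\alpha$ gives $\alpha(|X|-1)\,d(X,Y)$, since $\sum_{w\in Y}\sum_{x\in X}d(x,w)=d(X,Y)$.

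Combining the two identifications yields $|Y|\,d(X)\leq \alpha(|X|-1)\,d(X,Y)$, which is the claim. The only delicate point is the bookkeeping on the right-hand side: one must correctly account for the multiplicity $|X|-1$ arising from pairing each vertex of $X$ with every other, and use the symmetry $d(u,w)=d(w,u)$ so that the two summands $d(u,w)$ and $d(w,v)$ of the triangle inequality recombine cleanly into a single sum $\sum_{x\in X}d(x,w)$. Everything else is linear rearrangement, and setting $\alpha=1$ recovers the metric inequality of Borodin et al., as expected.
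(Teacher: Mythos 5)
Your proof is correct and is essentially identical to the paper's own argument: both apply the $\alpha$-relaxed triangle inequality to each unordered pair $\{u,v\}\subseteq X$ with each witness $w\in Y$, sum over all witnesses and pairs, and use the multiplicity count that each $x\in X$ lies in $|X|-1$ pairs to recombine the cross terms into $(|X|-1)\,d(X,Y)$. Your write-up just makes the bookkeeping (which the paper compresses into ``by changing $w$'' and ``all combinations of $u$ and $v$'') fully explicit.
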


\begin{proof}
Consider $u,v \in X$ and an arbitrary $w \in Y$. We know that:
$$\alpha(d(v,w) + d(w,u)) \geq d(u,v)$$
By changing $w$ we get:
$$\alpha(d(\{v\},Y)+d(\{u\},Y)) \geq |Y| d(u,v)$$ 
and then all combinations of $u$ and $v$:
$$\alpha (|X|-1) d(X,Y) \geq |Y| d(X)$$ 
\end{proof}

\begin{theorem}
Algorithm~\ref{alg1} achieves a $2\alpha$-approximation for solving Problem 1 with $\alpha$-relaxed distance $d(.,.)$ and monotone submodular function $f$.
\end{theorem}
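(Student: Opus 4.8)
The plan is to compare the greedy output $S$ with an optimal solution $O$, both of size $p$, and to show $\phi(O)\le 2\alpha\,\phi(S)$. First I would pass from $O$ to $S\cup O$: since $f$ is monotone and $d$ is nonnegative, $f(O)\le f(S\cup O)$ and $d(O)\le d(S\cup O)$, so $\phi(O)\le\phi(S\cup O)$. Writing $T=O\setminus S$ and expanding the enlarged objective around $S$ using submodularity of $f$ and the additive identity $d(S\cup T)=d(S)+d(T)+d(T,S)$, this yields
\[
\phi(O)\le \phi(S) + \sum_{o\in T} f_o(S) + \lambda\, d(T) + \lambda\, d(T,S),
\]
so the task reduces to absorbing the three extra terms into a multiple of $\phi(S)$.

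The internal optimal distance $d(T)$ is where the relaxed inequality enters: applying Lemma~\ref{lemma:alpha} with $X=T$ and $Y=S$ (disjoint by definition of $T$) gives $d(T)\le \frac{\alpha(|T|-1)}{p}\,d(T,S)\le \alpha\, d(T,S)$. Hence the two distance terms together are at most $\lambda(1+\alpha)\,d(T,S)$, and since $d(T,S)=\sum_{o\in T} d_o(S)$ this is a weighted sum of cross-distances of the optimal elements to $S$. Already the factor $1+\alpha\le 2\alpha$ has surfaced, matching the claimed ratio and collapsing to $2$ at $\alpha=1$.

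The remaining task, and the main obstacle, is to bound $\sum_{o\in T} f_o(S)$ and $\sum_{o\in T}\lambda d_o(S)$ by $\phi(S)$ using the greedy rule. The difficulty is that the two marginals move in opposite directions as the greedy set grows: submodularity makes $f_o(\cdot)$ shrink while $d_o(\cdot)$ grows, so a single-step comparison $\phi'_{s_i}(S_{i-1})\ge\phi'_o(S_{i-1})$ does not directly control the marginals evaluated at the final set $S$. My plan is to exploit that every $o\in T$ is available at every greedy step (as $o\notin S\supseteq S_{i-1}$), aggregate the greedy inequalities $\tfrac12 f_{s_i}(S_{i-1})+\lambda d_{s_i}(S_{i-1})\ge \tfrac12 f_o(S_{i-1})+\lambda d_o(S_{i-1})$ across the $p$ steps, and combine submodularity ($f_o(S_{i-1})\ge f_o(S)$) with the telescoping identity $\sum_{i=1}^p\!\big(\tfrac12 f_{s_i}(S_{i-1})+\lambda d_{s_i}(S_{i-1})\big)=\tfrac12 f(S)+\lambda d(S)$ to turn the aggregate into a bound on the final marginals. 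The $\tfrac12$ weight on $f$ in $\phi'$ is exactly what balances the $f$-marginals against the $(1+\alpha)$-inflated distance terms so both are charged to the same $\phi(S)$.

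Finally I would combine the distance bound (factor $1+\alpha$) with the marginal bound and simplify, using $\alpha\ge1$ and $|T|\le p$ to reach $\phi(O)\le 2\alpha\,\phi(S)$. One minor technical point to dispatch along the way is the case $O\cap S\neq\emptyset$, handled by working with $T=O\setminus S$ throughout so that the disjointness hypothesis of Lemma~\ref{lemma:alpha} is met, the shared elements only aiding the inequality.
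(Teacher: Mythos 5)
There is a genuine gap, and it is fatal to the whole route rather than a repairable step: your very first move, $\phi(O)\le\phi(S\cup O)$, is too lossy for this objective. Unlike pure monotone submodular maximization, the dispersion term can make $\phi(S\cup O)$ strictly larger than $2\alpha\,\phi(S)$, so no subsequent manipulation of the right-hand side can ever land back under the target. Concretely, take three points on a line at $0$, $5$, $10$, with $f\equiv 0$, $\lambda=1$, $p=2$, and the usual metric ($\alpha=1$). A legitimate greedy run (ties, or an arbitrarily small perturbation, let it open with the middle point) returns $S=\{0,5\}$ with $\phi(S)=5$, while $O=\{0,10\}$ with $\phi(O)=10$ --- the ratio $2$ is tight and the theorem holds. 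But $T=\{10\}$, $d(T,S)=5+10=15$, and $\phi(S\cup O)=20>2\alpha\,\phi(S)=10$. Your final absorption step would need $\sum_{o\in T}f_o(S)+\lambda(1+\alpha)\,d(T,S)\le(2\alpha-1)\phi(S)$, i.e.\ $30\le 5$. The obstacle you flagged yourself --- that the greedy inequalities control $d_o(S_{i-1})$, which is \emph{smaller} than the final marginal $d_o(S)$ --- is exactly the reason your aggregation plan cannot close this hole: submodularity fixes the direction for $f$, but for $d$ the monotonicity runs the wrong way, and here it must, because the inequality you are trying to reach is simply false.

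The paper's proof never upper-bounds cross-distances by $\phi(S)$; it keeps $d(C,G_i)$ on the \emph{favorable} side of the greedy inequality. At each step $i$ it compares the chosen element's gain $\phi'_{u_{i+1}}(G_i)$ with the average gain over $C=O\setminus(O\cap G_i)$, and uses Lemma~\ref{lemma:alpha} in the opposite direction from yours: combining inequalities \eqref{eq1}--\eqref{eq4} with suitable multipliers yields the \emph{lower} bound $d(C,G_i)\ge \frac{i|C|}{\alpha p(p-1)}d(O)$, i.e.\ the cross-distance that greedy is about to collect already dominates a fraction of the optimum's internal distance. Summing over $i=0,\dots,p-1$ makes the coefficients $\frac{i}{p(p-1)}$ telescope to $\frac12$, and the halved weight $f'=\frac12 f$ balances the submodular part, giving $\phi(G)\ge\frac{1}{2\alpha}\phi(O)$. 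So the correct use of the relaxed triangle inequality here is to convert $d(O)$ \emph{into} cross-distances accrued by greedy, not to convert $d(T)$ into a cross-distance term $d(T,S)$ that then has to be charged against $\phi(S)$ --- the latter quantity is uncontrollable, as the example shows.
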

\begin{proof}
Let $G_i$ be the greedy solution at the end of step $i$, $i < p$ and $G$ be the greedy solution at the end of the algorithm.
Suppose that $O$ is the optimal solution and let $A=O \cap G_i$, $B=G_i\setminus A$ and $C=O \setminus A$.
Obviously the algorithm achieves the optimal solution when $p=1$; thus we assume \mbox{$p>1$}. Now we consider two different cases: $|C| = 1$ and $|C| > 1$.
If $|C| = 1$ then $i = p-1$. Let $C = \{v\}$ and $u$ be the element that algorithm will take for the next (last) step. Then for all $v \in U\setminus S$ we have: 
\begin{align*}
\phi'_u(G_i) &\geq \phi'_v(G_i)\\
f'_u(G_i)+ \lambda d_u(G_i) &\geq f'_v(G_i)+ \lambda d_v(G_i)
\end{align*}
thus:
\begin{align*}
\phi_u(G_i) &= f_u(G_i)+ \lambda d_u(G_i) \\
            &\geq f'_u(G_i)+ \lambda d_u(G_i)\\
            &\geq f'_v(G_i)+ \lambda d_v(G_i)\\
            &\geq \frac{1}{2}\phi_v(G_i)
\end{align*}
as a result $\phi(G) \geq \frac{1}{2}\phi(O) \geq  \frac{1}{2\alpha}\phi(O)$.\\
Now consider $|C| > 1$.
By using Lemma~\ref{lemma:alpha} we have the following inequalities:\\
\begin{align}
\label{eq1}
\alpha (|C|-1)d(B,C) \geq |B|d(C)\\
\label{eq2}
\alpha (|C|-1)d(A,C) \geq |A|d(C)\\
\label{eq3}
\alpha (|A|-1)d(A,C) \geq |C|d(A)
\end{align}
$A$ and $C$ are two disjoint sets and we know that $A \cup C = O$; thus:
\begin{align}
\label{eq4}
d(A,C) + d(A) + d(C) = d(O)
\end{align}
We can assume that $p > 1$ and $|C| > 1$ (The greedy algorithm obviously finds the optimal solution when $p=1$).
Then following multipliers are applied to equations \ref{eq1}, \ref{eq2}, \ref{eq3}, \ref{eq4} respectively:\\
\begin{center}
$\frac{1}{(|C|-1)}$, $\frac{|C|-|B|}{p(|C|-1)}$, $\frac{i}{p(p-1)}$, $\frac{i|C|}{\alpha p(p-1)}$. 
\end{center}
If we add them, we have:
\begin{align*}
d(B,C) + d(A,C)
- d(A,C)\frac{i|C|(1-\frac{1}{\alpha})}{p(p-1)}
- d(C)\frac{i|C|(p-|C|)}{\alpha p(p-1)(|C|-1)}
\geq d(O) \frac{i|C|}{\alpha p(p-1)} \nonumber
\end{align*}
Since $p > |C|$ and $\alpha \geq 1$,
$$d(A,C)+d(B,C) \geq d(O) \frac{i|C|}{\alpha p(p-1)}.$$
thus (we substituted $\frac{1}{\alpha}$ with $x$, thus $0<x \leq 1$),
$$d(C,G_i) \geq d(O) \frac{xi|C|}{p(p-1)}$$
From the submodularity of $f'(.)$ we can get
$$\sum_{v \in C}f'_v(G_i) \geq f'(C \cup G_i) - f'(G_i)$$
also the monotonity of $f'(.)$ suggests that
$$f'(C \cup G_i) - f'(G_i) \geq f'(O) - f'(G).$$
Subsequently we have:
$$\sum_{v \in C}f'_v(G_i) \geq f'(O) - f'(G).$$
Therefore
\begin{align*}
\sum_{v \in C}\phi'_v(G_i) &= \sum_{v \in C}[f'_v(G_i)+ \lambda d(\{v\},G_i)]\\
                           &= \sum_{v \in C}f'_v(G_i)+ \lambda d(C,G_i)\\
                           &\geq [f'(O) - f'(G)] + d(O) \frac{\lambda xi|C|}{p(p-1)}.
\end{align*}
Let $u_{i+1}$ be the element taken at step $(i + 1)$, then we have
$$\phi'_{u_{i+1}}(G_i) \geq \frac{1}{p} [f'(O) - f'(G)] + d(O) \frac{\lambda xi}{p(p-1)}.$$
If we sum over all $i$ from 0 to $p-1$, we have
$$\phi'(G)=\sum_{i=0}^{p-1} \phi'_{u_{i+1}}(G_i) \geq [f'(O) - f'(G)] + d(O) \frac{\lambda x}{2}$$
Hence,
$$f'(G)+\lambda d(G) \geq f'(O) - f'(G) + d(O) \frac{\lambda x}{2}$$
and
\begin{align*}
\phi(G)=f(G)+\lambda d(G) &\geq \frac{1}{2}[f(O)+x\lambda d(O)] \\
					      &\geq \frac{x}{2}[f(O)+\lambda d(O)]\\
						  &= \frac{1}{2\alpha}\phi(O).
\end{align*}
\qed
\end{proof}

\section*{Problem 2. Max-Sum Diversification for Matroids}
Let $U$ be the underlying ground set, and $\mathcal{F}$ be the set of independent subsets of $U$ such that $\mathcal{M} = <U, \mathcal{F}>$ is a matroid.
Let $d(.,.)$ be a semi-metric distance function on $U$ and $f(.)$ be a non-negative monotone submodular set function measuring the weight of the subsets of $U$.
This problem aims to find a subset $S \subseteq \mathcal{F}$ that:\\\\
\hspace*{2cm} maximizes $f(S)+\lambda \sum_{\{u,v\}:u,v\in S}d(u,v)$\\\\
where $\lambda$ is a parameter specifying a trade-off between the two objectives.
Again, $\phi(S)$ is the value of the objective function. Because of the monotonicity of the $\phi(.)$, $S$ should be a basis of the matroid $\mathcal{M}$.
We give a $2\alpha-$approximation for this problem.

Without loss of generality, we assume that the rank of the matroid is greater than one. Let

\begin{displaymath}
\{x,y\} = \argmax_{x,y \in \mathcal{F}}[f(\{x,y\})+\lambda d(x,y)].
\end{displaymath}
We now consider the following local search algorithm:
\begin{algorithm}                      % enter the algorithm environment
\caption{Local Search algorithm}          % give the algorithm a caption
\label{alg2}                           % and a label for \ref{} commands later in the document
\begin{algorithmic}[1]                    % enter the algorithmic environment
\STATE \textit{Input}
      \STATE \hspace{\algorithmicindent} $U$: set of ground elements
      \STATE \hspace{\algorithmicindent} $\mathcal{M=<U,\mathcal{F}>}$: a matroid on $U$
      \STATE \hspace{\algorithmicindent} $S$: a basis of $\mathcal{M}$ containing both $x$ and $y$
\STATE \textit{Output}
      \STATE \hspace{\algorithmicindent} $S$
\WHILE{$\exists \{u \in (U-S) \land v \in S \}$ such that $S+u-v \in \mathcal{F} \land \phi(S+u-v) > \phi(S)$}
	\STATE \mbox{$S=S + u - v$}
\ENDWHILE
\RETURN \mbox{$S$}
\end{algorithmic}
\end{algorithm}
\begin{theorem}
\label{matroid_theorem}
Algorithm~\ref{alg2} achieves an approximation ratio of $2\alpha ^2$ for max-sum diversification with a matroid constraint. 
\end{theorem}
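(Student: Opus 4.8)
The plan is to run the cardinality-case argument through a matroid exchange, so the submodular side still contributes the factor $2$ while the relaxed triangle inequality contributes an extra $\alpha$. Let $S$ be the basis returned by Algorithm~\ref{alg2} (a local optimum) and let $O$ be an optimal basis, and set $A = S\cap O$, $B = S\setminus O$, $C = O\setminus S$; since both sets are bases, $|B| = |C|$. The first ingredient is the strong basis-exchange property: there is a bijection $\pi : C \to B$ such that $S - \pi(u) + u$ is again a basis for every $u\in C$. Each such swap is a legal move, so local optimality gives $\phi(S) \geq \phi(S - \pi(u) + u)$ for every $u\in C$. Expanding with the marginal notation, each inequality reads $f_{\pi(u)}(S-\pi(u)) + \lambda d_{\pi(u)}(S) \geq f_u(S-\pi(u)) + \lambda(d_u(S) - d(u,\pi(u)))$, and I would sum all $|C|$ of them.

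For the submodular part I would recover the factor $2$ exactly as in the cardinality case. By submodularity $f_u(S-\pi(u)) \geq f_u(S)$, and $\sum_{u\in C} f_u(S) \geq f(S\cup O) - f(S) \geq f(O) - f(S)$ by monotonicity; telescoping over $B$ gives $\sum_{u\in C} f_{\pi(u)}(S-\pi(u)) = \sum_{v\in B} f_v(S-v) \leq f(S) - f(A) \leq f(S)$. Combining these with the summed inequality yields $2f(S) \geq f(O) + \lambda\bigl(d(A,C) + d(B,C) - M - d(A,B) - 2d(B)\bigr)$, where $M = \sum_{u\in C} d(u,\pi(u))$ is the total length of the matched edges and I have used $\sum_{u\in C} d_u(S) = d(A,C)+d(B,C)$ together with $\sum_{v\in B} d_v(S) = d(A,B)+2d(B)$.

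The distance part is the crux and is where the second $\alpha$ enters. Adding $2\lambda d(S) = 2\lambda\bigl(d(A) + d(B) + d(A,B)\bigr)$ to the previous bound, the $d(B)$ terms cancel and I must still dominate the ``optimal'' distances $d(C)$ and $d(A,C)$ appearing in $\phi(O)$ by the ``solution'' cross-distances that survive. Here Lemma~\ref{lemma:alpha} with $X=C$, $Y=B$ and $|B|=|C|$ gives $d(C) \leq \alpha\, d(B,C)$ at the cost of one factor $\alpha$, while migrating a cross edge $d(a,u)$ with $a\in A$, $u\in C$ onto the solution costs another relaxed step $d(a,u) \leq \alpha(d(a,\pi(u)) + d(u,\pi(u)))$; more generally, routing an edge between two optimal elements $u,u'\in C$ to the corresponding solution edge, $d(u,u') \leq \alpha^2 d(\pi(u),\pi(u')) + \alpha d(u,\pi(u)) + \alpha^2 d(\pi(u'),u')$, replaces each of the two endpoints through the matching and thereby accumulates $\alpha^2$.

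The main obstacle is precisely this bookkeeping: one must show that the matched-edge corrections $M$ (which satisfy $M \leq d(B,C)$, being a subsum of the full bipartite cross-distance) are absorbed by the positive cross terms $d(A,C)$ and $d(B,C)$ produced by local optimality rather than overwhelming them, and that the coefficients line up so the net loss is exactly $\alpha^2$ and not worse. Once the distance estimate is substituted back and $\alpha\geq 1$ is used to discard the slack in the $f(O)$ and $d(A)$ contributions, everything should collapse to $\phi(S) \geq \frac{1}{2\alpha^2}\phi(O)$, i.e.\ the claimed $2\alpha^2$-approximation, which specializes to Borodin et al.'s factor $2$ when $\alpha=1$.
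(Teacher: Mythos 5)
Your framework is exactly the paper's: the exchange bijection of Lemma~\ref{bijective}, summing the local-optimality inequalities $\phi(S)\geq\phi(S-\pi(u)+u)$ over all $t$ swaps, and handling the submodular side by $\sum_{u\in C} f_u(S-\pi(u)) \geq f(O)-f(S)$ and $\sum_{v\in B} f_v(S-v)\leq f(S)$ (this is precisely the content of Lemma~\ref{func_lemma}). Your combined inequality $2f(S) \geq f(O) + \lambda\bigl(d(A,C)+d(B,C)-M-d(A,B)-2d(B)\bigr)$ is also correct and matches the expansion at the start of the paper's Lemma~\ref{dist_lemma}. The gap is that the distance bookkeeping you yourself call ``the main obstacle'' is the entire mathematical content of the theorem --- it is what the paper's Lemmas~\ref{dist_temp_lemma} and~\ref{dist_lemma} are for --- and you leave it unresolved. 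Saying that $M\leq d(B,C)$ is not enough: after adding $2\lambda d(S)$ the positive budget consists of $d(A,C)$, $d(B,C)-M$, $2d(A)$ and $d(A,B)$, and you need the quantitative bound $\alpha\bigl(d(B,C)-M\bigr)\geq d(C)$, not mere nonnegativity of $d(B,C)-M$.

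Moreover, the mechanism you sketch for producing the factor $\alpha^2$ would fail. Your routing $d(u,u') \leq \alpha\, d(u,\pi(u)) + \alpha^2 d(\pi(u),\pi(u')) + \alpha^2 d(\pi(u'),u')$ charges the optimal edges $d(C)$ to exactly the quantities that are \emph{not} available: the matched edges $d(u,\pi(u))$ enter the budget with a \emph{negative} sign (they are $-M$), and edges inside $B$ have coefficient zero, since the $-2d(B)$ from the swaps cancels the $+2d(B)$ from $2d(S)$. The paper avoids both: for $t>2$ it sums $\alpha\bigl(d(b_i,c_j)+d(b_i,c_k)\bigr)\geq d(c_j,c_k)$ over triples with $i,j,k$ pairwise distinct, so only \emph{unmatched} cross edges appear, giving $\alpha\bigl(d(B,C)-M\bigr)\geq d(C)$ (Lemma~\ref{dist_temp_lemma}); note this costs a single factor $\alpha$, so for $t>2$ the loss is only $\alpha$, not $\alpha^2$. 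The factor $\alpha^2$ is forced solely by the case $t=2$, which your proposal never isolates: there the triple argument is vacuous, and $d(C)=d(c_1,c_2)$ must instead be routed through an element $z\in A$ (nonempty because the rank exceeds two) via $d(c_1,c_2)\leq \alpha^2\bigl(d(c_1,b_2)+d(b_2,z)+d(c_2,b_1)+d(b_1,z)\bigr)$, which uses only unmatched cross edges and $d(A,B)$. Without this case split and these two lemmas, the proposal does not establish the $2\alpha^2$ bound.
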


As the algorithm is optimal for the case that the rank of the matroid is two, we assume that the rank of the matroid is greater than two.
The notation is like before and $O$ and $S$ are the optimal solution and the solution at the end of the local search algorithm, respectively. Let $A = O \cap S$, $B = S - A$ and $C = O - A$.
We utilize the following two lemmas from the \cite{borodin}.

\begin{lemma}
\label{bijective}
For any two sets $X, Y \in \mathcal{F}$ with $|X| = |Y|$, there is a bijective mapping $g : X \rightarrow Y$ such that  $X - x + g(x) \in \mathcal{F}$ for any $x \in X.$
\end{lemma}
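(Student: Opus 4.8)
The plan is to reduce the statement to finding a perfect matching in an auxiliary bipartite graph and then to verify Hall's condition via the fundamental-circuit structure of the matroid. First I would dispose of the common elements: for every $x \in X \cap Y$ I set $g(x) = x$, which trivially satisfies $X - x + g(x) = X \in \mathcal{F}$. It then remains to construct a bijection between $X' = X \setminus Y$ and $Y' = Y \setminus X$ (note $|X'| = |Y'|$ since $|X| = |Y|$) with the required exchange property relative to $X$. I would build a bipartite graph $H$ with parts $X'$ and $Y'$, placing an edge between $x \in X'$ and $y \in Y'$ exactly when $X - x + y \in \mathcal{F}$; a perfect matching of $H$ is precisely the desired bijection. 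By Hall's theorem it suffices to prove $|N(T)| \geq |T|$ for every $T \subseteq X'$, where $N(T) \subseteq Y'$ denotes the neighborhood.

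To check Hall's condition, fix $T \subseteq X'$ and set $W = X \setminus T$, an independent set of size $|X| - |T|$. Since $Y$ is independent with $|Y| = |X| > |W|$, repeated application of the matroid augmentation property lets me enlarge $W$ to an independent set of size $|Y|$ using only elements of $Y \setminus W$; call the added set $Z$, so $|Z| = |T|$ and $W \cup Z \in \mathcal{F}$. A short check shows $Z \subseteq Y'$: any element of $Z$ lying in $X$ would have to lie in $T \subseteq X \setminus Y$, contradicting its membership in $Y$. The key step is then to show $Z \subseteq N(T)$. For each $y \in Z$, if $X + y \in \mathcal{F}$ then $y$ is adjacent to every vertex of $X'$; otherwise $X + y$ contains a unique circuit $C_y$ through $y$, and $X - x + y \in \mathcal{F}$ holds exactly for $x \in C_y \setminus \{y\}$. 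Because $W + y \subseteq W \cup Z$ is independent, $C_y$ cannot be contained in $W + y$, which forces $C_y$ to meet $T$; hence some $x \in T$ is adjacent to $y$, giving $y \in N(T)$. Therefore $|N(T)| \geq |Z| = |T|$, Hall's condition holds, and the matching exists.

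I expect the main obstacle to be this neighborhood step, namely arguing cleanly that the augmenting elements $Z$ land in $Y'$ and that each of them is matchable into $T$ through the fundamental-circuit argument. Once Hall's condition is verified, combining the matching on $X' \to Y'$ with the identity on $X \cap Y$ yields the full bijection $g : X \to Y$ satisfying $X - x + g(x) \in \mathcal{F}$ for every $x \in X$, which completes the proof.
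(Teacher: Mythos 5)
Your proof is correct, but note that the paper itself gives no proof of this lemma: it is quoted as a black box from Borodin et al., who in turn take it from classical matroid theory (it is Brualdi's basis-exchange theorem, here stated for equal-cardinality independent sets rather than bases). So your argument is a genuine self-contained addition rather than a variant of anything in the paper. The route you take is sound: reducing to a perfect matching between $X' = X \setminus Y$ and $Y' = Y \setminus X$, then verifying Hall's condition by augmenting $W = X \setminus T$ inside $Y$ and invoking the uniqueness of the fundamental circuit $C_y \subseteq X + y$. The crux is exactly where you flagged it, and it goes through: since $W + y \subseteq W \cup Z$ is independent, $C_y \not\subseteq W + y$; since $C_y \subseteq X + y$ and the elements of $X + y$ absent from $W + y$ are precisely those of $T$, the circuit $C_y$ meets $T$, and any $x \in C_y \cap T$ satisfies $X - x + y \in \mathcal{F}$ because the only circuit of $X + y$ is $C_y$ and removing $x$ destroys it. Two trivial boundary points you glossed over: Hall's condition is vacuous for $T = \emptyset$, and in the case $X + y \in \mathcal{F}$ you need $T \neq \emptyset$ to conclude $y \in N(T)$; both are harmless. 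A structural advantage of your argument is that it handles equal-size independent sets directly, whereas the textbook derivation would first prove the statement for bases and then transfer it to independent sets by restricting to $X \cup Y$ and truncating the matroid at rank $|X|$; your version avoids that detour entirely.
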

%This is a known property of a matriod.
Since both $S$ and $O$ are bases of the matroid, they have the same cardinality; subsequently, $B$ and $C$ have the same cardinality, too. 
Let $g : B \rightarrow C$ be the bijective mapping results from Lemma~\ref{bijective} such that $S - b + g(b) \in F$ for any $b \in B$. Let $B = \{b_1 , b_2 , . . . , b_t\}$, and let $c_i = g(b_i)$ for all $i$.
%By Lemma \ref{bijective}, there is a bijective mapping $g : B \rightarrow C$ such that $S - b + g(b) \in F$ for any $b \in B$. Let $B = \{b_1 , b_2 , . . . , b_t\}$, and let $c_i = g(b_i)$ for all $i$. 
As claimed before, since the algorithm is optimal for $t=1$, we assume $t \geq 2$.
%Without loss of generality, we assume $t \geq 2$, for otherwise, the algorithm is optimal by the local optimality condition.
\begin{lemma}
\label{func_lemma}
$\sum_{i=1}^t f(S-b_i+c_i) \geq (t-2)f(S)+f(O).$
\end{lemma}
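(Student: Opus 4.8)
The plan is to reduce the lemma to bounding two telescoping sums built entirely from submodularity, treating the matroid and the bijection from Lemma~\ref{bijective} only as the guarantee that the pairs $(b_i,c_i)$ are well-defined, distinct exchange pairs. Using the marginal-gain notation $f_u(S)=f(S+u)-f(S)$ introduced above, I would first rewrite each summand as
$$f(S-b_i+c_i)-f(S)=f_{c_i}(S-b_i)-f_{b_i}(S-b_i),$$
which holds because both sides equal $f(S-b_i+c_i)-f(S-b_i)-\bigl(f(S)-f(S-b_i)\bigr)$. Summing over $i$, the claim $\sum_i f(S-b_i+c_i)\geq (t-2)f(S)+f(O)$ becomes equivalent to
$$\sum_{i=1}^{t} f_{c_i}(S-b_i)-\sum_{i=1}^{t} f_{b_i}(S-b_i)\geq f(O)-2f(S),$$
so the entire proof splits into a lower bound of $f(O)-f(S)$ on the first sum and an upper bound of $f(S)$ on the second.

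For the first sum I would use diminishing returns in a two-step chain. Since $S-b_i\subseteq S\subseteq S\cup\{c_1,\dots,c_{i-1}\}$ and $c_i$ lies outside the largest of these sets, submodularity gives $f_{c_i}(S-b_i)\geq f_{c_i}\bigl(S\cup\{c_1,\dots,c_{i-1}\}\bigr)$. The lower-bounding terms telescope to $f(S\cup C)-f(S)$, and because $O=A\cup C\subseteq A\cup B\cup C=S\cup C$, monotonicity yields $f(S\cup C)\geq f(O)$. Hence $\sum_i f_{c_i}(S-b_i)\geq f(O)-f(S)$.

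For the second sum I would run the inclusion in the opposite direction: the set $A\cup\{b_1,\dots,b_{i-1}\}$ is contained in $S-b_i$ (all these elements lie in $S$ and are distinct from $b_i$), so submodularity gives $f_{b_i}(S-b_i)\leq f_{b_i}\bigl(A\cup\{b_1,\dots,b_{i-1}\}\bigr)$. These upper-bounding terms telescope to $f(A\cup B)-f(A)=f(S)-f(A)\leq f(S)$, using non-negativity of $f$. Combining the two bounds gives exactly $f(O)-2f(S)$, and adding back $t\,f(S)$ recovers $(t-2)f(S)+f(O)$.

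The main obstacle will be orienting every submodularity step correctly and verifying the supporting set-containment facts, rather than any heavy computation: I must confirm that each $c_i\notin S$ (which holds as $C=O-A$ is disjoint from $S$), that the partial families $\{c_1,\dots,c_{i-1}\}$ and $\{b_1,\dots,b_{i-1}\}$ are genuinely nested and that all $b_i,c_i$ are distinct so that both telescopings are valid, and that the direction of each inequality matches which side wants the marginal evaluated on the smaller set. Once these containments are pinned down, the two telescoping identities and the single application of monotonicity and of non-negativity close the argument.
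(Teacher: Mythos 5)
Your proof is correct, and the first thing to note is that the paper itself never proves this lemma: it is imported from Borodin et al.\cite{borodin} (``We utilize the following two lemmas from\dots''), so the only proof to compare against is the one in that reference. Your argument is essentially a self-contained reconstruction of it, organized differently. Borodin et al.\ apply the lattice form of submodularity, $f(S-b_i+c_i)+f(S)\geq f(S+c_i)+f(S-b_i)$, to each term, and then invoke the two aggregate bounds $\sum_i[f(S+c_i)-f(S)]\geq f(S\cup C)-f(S)\geq f(O)-f(S)$ and $\sum_i[f(S)-f(S-b_i)]\leq f(S)-f(A)\leq f(S)$. You instead decompose $f(S-b_i+c_i)-f(S)$ into the pair of marginals $f_{c_i}(S-b_i)-f_{b_i}(S-b_i)$ and prove the corresponding aggregate bounds by explicit diminishing-returns telescoping; the set-containment checks you flag ($C\cap S=\emptyset$, distinctness of the $c_i$ coming from the bijectivity of $g$, and $A\cup\{b_1,\dots,b_{i-1}\}\subseteq S-b_i$) are exactly the ones needed, and all of them hold. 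Both routes use the same three ingredients --- submodularity, monotonicity (for $f(S\cup C)\geq f(O)$, via $O=A\cup C\subseteq S\cup C$), and non-negativity (to discard $f(A)$) --- and neither needs the matroid structure beyond the existence of the exchange pairs. Yours has the modest advantage of being fully self-contained: every inequality is derived directly from the diminishing-returns definition rather than from a term-wise lattice inequality plus aggregate facts quoted as standard, which is a genuine service here given that the present paper omits the argument entirely.
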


Now we are going to prove two lemmas regarding to our semi-metric distance function.
\begin{lemma}
\label{dist_temp_lemma}
If $t > 2$, $\alpha (d(B,C) - \sum_{i=1}^t d(b_i, c_i)) \geq d(C).$
\end{lemma}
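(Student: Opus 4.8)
The plan is to reduce the claim to repeated applications of the relaxed triangle inequality through a third index, and then to cancel a common factor by a counting argument. First I would unfold the definitions. Writing $B=\{b_1,\dots,b_t\}$ and $c_i=g(b_i)$, the quantity $d(B,C)-\sum_{i=1}^t d(b_i,c_i)$ is precisely the sum of all \emph{off-diagonal} cross terms $\sum_{i\neq j} d(b_i,c_j)$, while the target $d(C)=\sum_{i<j} d(c_i,c_j)$ is a sum over unordered pairs inside $C$. So the claim is equivalent to $\alpha \sum_{i\neq j} d(b_i,c_j) \geq \sum_{i<j} d(c_i,c_j)$.

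The key idea is that, because $t>2$, for every pair of indices $i<j$ there is at least one third index $k\notin\{i,j\}$, and the point $b_k$ can serve as an intermediate vertex that is off-diagonal for both $c_i$ and $c_j$. Applying the $\alpha$-relaxed inequality with $u=c_i$, $v=c_j$, $w=b_k$ gives $d(c_i,c_j)\le \alpha\big(d(b_k,c_i)+d(b_k,c_j)\big)$, and crucially both terms on the right are off-diagonal since $k\neq i$ and $k\neq j$. Rather than committing to one choice of $k$, I would average over all $t-2$ admissible choices, obtaining $(t-2)\,d(c_i,c_j)\le \alpha\sum_{k\neq i,j}\big(d(b_k,c_i)+d(b_k,c_j)\big)$, and then sum this inequality over all pairs $i<j$.

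The remaining work — and the step I expect to require the most care — is the bookkeeping on the right-hand side: I must show that after summing over all pairs, each off-diagonal term $d(b_K,c_M)$ with $K\neq M$ is counted exactly $t-2$ times. This follows by a short case count: $d(b_K,c_M)$ is produced once for each pair $\{i,j\}$ in which $M\in\{i,j\}$ and $K\notin\{i,j\}$, and fixing $M$ together with $K\neq M$ leaves exactly $t-2$ choices for the other index. Hence the doubled sum equals $(t-2)\sum_{K\neq M} d(b_K,c_M)$, the factor $(t-2)$ matches the one multiplying $d(C)$ on the left, and dividing through (legal because $t>2$) yields $d(C)\le \alpha\big(d(B,C)-\sum_i d(b_i,c_i)\big)$ exactly.

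It is worth noting that this argument mirrors Lemma~\ref{lemma:alpha}: both proofs fix a pair inside one set and average a relaxed-triangle bound over an intermediate set, and the hypothesis $t>2$ is exactly what guarantees a third, off-diagonal pivot exists. This also explains why the naive application of Lemma~\ref{lemma:alpha} with $X=C$ and $Y=B$, which gives only $d(C)\le \tfrac{\alpha(t-1)}{t}\,d(B,C)$, is not strong enough: it cannot produce the diagonal subtraction that the matching structure here supplies.
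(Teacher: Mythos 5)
Your proof is correct and is essentially the paper's own argument: the paper likewise applies the relaxed triangle inequality $\alpha\big(d(b_i,c_j)+d(b_i,c_k)\big) \geq d(c_j,c_k)$ over all triples of distinct indices, observes that each off-diagonal cross term $d(b_i,c_j)$, $i \neq j$, and each pair $d(c_j,c_k)$ is counted exactly $t-2$ times, and cancels the common factor $t-2$. Your write-up only differs presentationally, in that you organize the summation pair-by-pair and spell out the bookkeeping (and the closing remark about why Lemma~\ref{lemma:alpha} alone is too weak is a nice, correct observation).
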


\begin{proof}
For any $b_i, c_j, c_k,$ we have
$$\alpha (d(b_i,c_j)+d(b_i,c_k)) \geq d(c_j,c_k).$$
Summing up these inequalities over all $i, j, k$ with $i \neq j$, $i \neq k$, $j \neq k$, we have each $d(b_i , c_j)$ with $i \neq j$ is counted $(t - 2)$ times; and each $d(c_i , c_j)$ with $i \neq j$ is counted $(t - 2)$ times. Therefore
$$\alpha (t-2)[d(B,C) - \sum_{i=1}^t d(b_i, c_i)] \geq (t-2)d(C),$$
and the lemma follows.
\end{proof}

\begin{lemma}
\label{dist_lemma}
$\sum_{i=1}^t d(S-b_i+c_i) \geq (t-2)d(S) + \frac{1}{\alpha}d(O).$
\end{lemma}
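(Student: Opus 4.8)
The plan is to write each term $d(S-b_i+c_i)$ as a controlled perturbation of $d(S)$ and then sum over $i$. Removing $b_i$ from $S$ and inserting $c_i$ changes the pairwise–distance sum by deleting every pair incident to $b_i$ and adding every pair incident to $c_i$ taken against $S-b_i$, so
\[
d(S-b_i+c_i)=d(S)-\sum_{v\in S-b_i}d(b_i,v)+\sum_{v\in S-b_i}d(c_i,v).
\]
First I would sum this identity over $i=1,\dots,t$, which reduces the lemma to a purely combinatorial count of how often each elementary distance block appears.

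Next I would carry out the counting using the decomposition $S=A\cup B$ with $B=\{b_1,\dots,b_t\}$ and $C=\{c_1,\dots,c_t\}$. The deletion terms contribute $\sum_i\sum_{v\in S-b_i}d(b_i,v)=d(A,B)+2d(B)$, since each $A$–$B$ pair is seen once while each pair inside $B$ is seen twice. The insertion terms contribute $\sum_i\sum_{v\in S-b_i}d(c_i,v)=d(A,C)+d(B,C)-\sum_{i=1}^t d(b_i,c_i)$, because the diagonal pairs $d(b_i,c_i)$ are exactly the ones excluded when $v$ ranges over $S-b_i$. Substituting $d(S)=d(A)+d(B)+d(A,B)$ and cancelling, the target collapses, after subtracting $(t-2)d(S)$ from both sides, to
\[
2d(A)+d(A,B)+d(A,C)+\Big(d(B,C)-\sum_{i=1}^t d(b_i,c_i)\Big)\geq \tfrac{1}{\alpha}d(O).
\]

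Finally I would close the gap using Lemma~\ref{dist_temp_lemma} together with $d(O)=d(A)+d(C)+d(A,C)$. Lemma~\ref{dist_temp_lemma} yields $d(B,C)-\sum_i d(b_i,c_i)\geq \tfrac{1}{\alpha}d(C)$, which supplies the $\tfrac{1}{\alpha}d(C)$ term on the right; the remaining requirement $2d(A)+d(A,B)+d(A,C)\geq \tfrac1\alpha\big(d(A)+d(A,C)\big)$ then follows immediately from $\alpha\geq 1$ together with the non-negativity of all distance blocks. I expect the main obstacle to be the bookkeeping in the counting step, in particular pinning down the multiplicity $2d(B)$ and the excluded diagonal $\sum_i d(b_i,c_i)$ exactly, rather than the final inequality manipulation, since that last step is precisely the single place where Lemma~\ref{dist_temp_lemma} injects the factor $1/\alpha$ that degrades the metric bound $d(O)$ into $\tfrac1\alpha d(O)$.
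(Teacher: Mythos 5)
Your expansion and bookkeeping are correct and match the paper's own computation: both reduce the claim to the inequality
\begin{equation*}
2d(A)+d(A,B)+d(A,C)+\Bigl(d(B,C)-\sum_{i=1}^t d(b_i,c_i)\Bigr)\;\geq\;\tfrac{1}{\alpha}d(O),
\end{equation*}
and for $t>2$ your use of Lemma~\ref{dist_temp_lemma} closes it exactly as the paper does. The genuine gap is the case $t=2$, which your proof never addresses: Lemma~\ref{dist_temp_lemma} is stated and proved only for $t>2$ (its proof counts triples $b_i,c_j,c_k$ with distinct indices and divides by $t-2$, which degenerates at $t=2$), yet the surrounding setup only guarantees $t\geq 2$. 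So your invocation of that lemma is outside its range of validity in a case that must be covered. The failure is not cosmetic: for $t=2$ the quantity $d(B,C)-\sum_i d(b_i,c_i)$ is just $d(b_1,c_2)+d(b_2,c_1)$, and in a semi-metric there is no way to lower-bound this by $\tfrac{1}{\alpha}d(c_1,c_2)$ — the relaxed triangle inequality through $b_1$ reads $d(c_1,c_2)\leq\alpha\bigl(d(c_1,b_1)+d(b_1,c_2)\bigr)$ and involves precisely the diagonal term $d(b_1,c_1)$ that has been subtracted away, so the diagonals can absorb all of $d(c_1,c_2)$.

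The paper handles $t=2$ by a separate argument: since the matroid rank exceeds two, $A\neq\emptyset$, and it picks $z\in A$ and chains two relaxed triangle inequalities ($c_1 \to z \to c_2$ via $b_1$ and $b_2$), which costs a factor $\alpha$ twice and yields only $(t-2)d(S)+\tfrac{1}{\alpha^2}d(O)$ in that case. This is exactly why Theorem~\ref{matroid_theorem} is stated with ratio $2\alpha^2$ rather than $2\alpha$, and why the theorem's proof invokes the lemma in its $\tfrac{1}{\alpha^2}$ form. (Indeed, the constant $\tfrac{1}{\alpha}$ in the lemma's statement is itself inconsistent with the paper's own $t=2$ analysis; the bound actually established, and the one used downstream, is $\tfrac{1}{\alpha^2}d(O)$.) To repair your proof you must either add the $t=2$ case along the paper's lines — accepting the weaker constant $\tfrac{1}{\alpha^2}$ — or restrict your claim to $t>2$ and argue the remaining case separately.
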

\begin{proof}
\begin{align*}
\sum_{i=1}^t d(S-b_i+c_i)& \\
&= \sum_{i=1}^t[d(S) + d(c_i, S-b_i)-d(b_i, S-b_i)]\\
&= td(S) + \sum_{i=1}^t d(c_i, S-b_i) - \sum_{i=1}^t d(b_i, S-b_i)\\ 
&= td(S) + \sum_{i=1}^t d(c_i, S) - \sum_{i=1}^t d(c_i, b_i) - \sum_{i=1}^t d(b_i, S-b_i)\\ 
&= td(S) +  d(C, S) - \sum_{i=1}^t d(c_i, b_i) - d(A, B) - 2d(B). 
\end{align*}
There are two cases. If $t > 2$ then by Lemma \ref{dist_temp_lemma} we have 
\begin{align*}
d(C, S) - \sum_{i=1}^t d(c_i, b_i)\\
&= d(A, C) + d(B, C) - \sum_{i=1}^t d(c_i, b_i)\\
&\geq d(A, C) + \frac{1}{\alpha}d(C).
\end{align*}
We know that
\begin{align*}
d(S) = d(A) + d(B) + d(A, B)
\end{align*}
thus we have 
\begin{align*}
2d(S) - d(A, B) - 2d(B) \geq d(A). 
\end{align*}
Therefore
\begin{align*}
\sum_{i=1}^t d(S-b_i+c_i)& \\
&= td(S) + d(C, S) - \sum_{i=1}^t d(c_i, b_i) - d(A, B) - 2d(B)\\
&\geq (t-2)d(S) + d(A, C) + \frac{1}{\alpha}d(C) + d(A)\\
&\geq (t-2)d(S) + \frac{1}{\alpha}d(O)
\end{align*}
if $t=2$, then since the rank of the matroid is greater than two, $A \neq \emptyset$. Let $z$ be an element in $A$, then we have

\begin{align*}
&2d(S) +  d(C, S) - \sum_{i=1}^t d(c_i, b_i) - d(A, B) - 2d(B)\\
&= d(A, C) + d(B, C) - \sum_{i=1}^t d(c_i, b_i) + 2d(A) + d(A, B)\\
&\geq d(A, C) + d(c_1, b_2) + d(c_2, b_1) + d(A) + d(z, b_1) + d(z, b_2)\\
&\geq d(A, C) + d(A) + \frac{1}{\alpha}d(c_1, z) + \frac{1}{\alpha}d(c_2, z)\\
&\geq d(A, C) + d(A) + \frac{1}{\alpha ^2}d(c_1, c_2)\\
&\geq \frac{1}{\alpha ^2}(d(A, C) + d(A) + d(C))\\
&\geq \frac{1}{\alpha ^2}d(O)\\
\end{align*}
Therefore
\begin{align*}
&\sum_{i=1}^t d(S-b_i+c_i) \\
&= td(S) +  d(C, S) - \sum_{i=1}^t d(c_i, b_i) - d(A, B) - 2d(B)\\
&\geq (t-2)d(S) + \frac{1}{\alpha ^2}d(O).
\end{align*}
This completes the proof.
\end{proof}

%Now by using Lemma \ref{func_lemma} and \ref{dist_lemma} we are ready to complete the proof of Theorem \ref{matroid_theorem}.
Now we can complete the proof of Theorem~\ref{matroid_theorem}.
\begin{proof}
Since $S$ is a locally optimal solution, we have $\phi(S) \geq \phi(S-b_i+c_i)$ for all $i$. Therefore for all $i$ we have 
$$f(S) + \lambda d(S) \geq f(S-b_i+c_i) + \lambda d(S-b_i+c_i)$$
Summing up over all $i$, we have
$$tf(S)+\lambda td(S) \geq \sum_{i=1}^tf(S-b_i+c_i) + \lambda \sum_{i=1}^t d(S-b_i+c_i)$$
By Lemma~\ref{func_lemma} we know
$$tf(S)+\lambda td(S) \geq (t-2)f(S) + f(O) + \lambda \sum_{i=1}^t d(S-b_i+c_i)$$
Then by Lemma~\ref{dist_lemma} we have
$$tf(S)+\lambda td(S) \geq (t-2)f(S) + f(O) + \lambda(t-2)d(S)+\frac{\lambda}{\alpha ^2}d(O)$$
Therefore,
$$2f(S)+2\lambda d(S) \geq f(O) +\frac{\lambda}{\alpha ^2}d(O)$$
Since $\alpha \geq 1$,
$$2f(S)+2\lambda d(S) \geq f(O) +\frac{\lambda}{\alpha ^2}d(O) \geq \frac{1}{\alpha ^2}\phi(O)$$
$$\phi(S) \geq \frac{1}{2\alpha ^2}\phi(O).$$
\end{proof}
\qed

\section*{Conclusion}
In this study we answer a proposed question in \cite{borodin} about the existence of a bound on max-sum diversification problem with semi-metric distances and give a $2\alpha$-approximation for this question in the case that there is not any matroid constraint and a 2$\alpha^2$-approximation for an arbitrary matroid constraint.
One interesting question that may be posed is whether it is possible to prove similar results for a non-monotone submodular function? 

\bibliographystyle{abbrv}
\bibliography{main.bib}

\end{document}